\documentclass[11pt,a4paper]{article}
\usepackage[]{authblk}
\usepackage[hyperref]{acl2021}
\usepackage{times}
\usepackage{latexsym}

\usepackage{microtype}

\aclfinalcopy 
\setlength\titlebox{7cm}

\usepackage{amsfonts}
\usepackage{amsmath}
\usepackage{amsthm}
\usepackage[lined,ruled]{algorithm2e}
\usepackage{algpseudocode}
\usepackage{graphicx}\usepackage{multirow}
\usepackage{enumitem}
\usepackage{color}
\usepackage{tikz}
\usepackage{xspace}
\usepackage{subcaption}
\usepackage[percent]{overpic}

\newcommand{\baseDP}{\textsc{SanText}\xspace}
\newcommand{\baseDPp}{\textsc{SanText}$^+$\xspace}

\newcommand{\eg}{{\it e.g.}\xspace}

\newcommand{\ie}{{\it i.e.}\xspace}

\newcommand{\M}{\mathcal{M}}
\newcommand{\X}{\mathcal{X}}
\newcommand{\Y}{\mathcal{Y}}
\newcommand{\V}{\mathcal{V}}
\renewcommand{\S}{\mathcal{S}}

\newcommand{\euc}{\mathsf{euc}}
\newcommand{\privacy}{\mathsf{privacy}}

\newtheorem{definition}{\bf Definition}
\newtheorem{theorem}{\bf Theorem}

\definecolor{earthyellow}{rgb}{0.88, 0.66, 0.37}

\newcommand{\nop}[1]{}

\makeatletter
\def\@fnsymbol#1{\ensuremath{\ifcase#1\or \dagger\or \ddagger\or
   \mathsection\or \mathparagraph\or \|\or **\or \dagger\dagger
   \or \ddagger\ddagger \else\@ctrerr\fi}}
    \makeatother

\title{Differential Privacy for Text Analytics via Natural Text Sanitization\thanks{\ Our code is available at \url{https://github.com/xiangyue9607/SanText}.
}}

\author[1,\thanks{\ \ The first two authors contributed equally.}]{Xiang Yue}
\author[2]{Minxin Du}
\author[3]{Tianhao Wang}
\author[4]{Yaliang Li}
\author[1]{Huan Sun}
\author[2]{Sherman S. M. Chow}

\affil[1]{The Ohio State University}
\affil[2]{The Chinese University of Hong Kong}
\affil[3]{Carnegie Mellon University}
\affil[4]{Alibaba Group}
\affil[ ]{\{\texttt{yue.149,sun.397\}@osu.edu,\{\texttt{dm018,sherman\}@ie.cuhk.edu.hk}}}
\affil[ ]{\texttt{tianhao@cmu.edu},\quad \texttt{yaliang.li@alibaba-inc.com}}

\begin{document}
\maketitle

\begin{abstract}
Texts convey sophisticated knowledge.
However, texts also convey sensitive information. 
Despite the success of general-purpose language models and domain-specific mechanisms with differential privacy (DP), existing text sanitization mechanisms still provide low utility, as cursed by the high-dimensional text representation. 
The companion issue of utilizing sanitized texts for downstream analytics is also under-explored. 
This paper takes a direct approach to text sanitization.
Our insight is to consider both sensitivity and similarity via our new local DP notion. The sanitized texts also contribute to our sanitization-aware pretraining and fine-tuning, enabling privacy-preserving natural language processing
over the BERT language model with promising utility.
Surprisingly,  the high utility does not boost up the success rate of inference attacks.
\end{abstract}

\section{Introduction}
Natural language processing (NLP) requires a lot of training data, which can be sensitive.
Na\"ive redaction approaches (\eg, removing common personally identifiable information)
is known to fail~\cite{jots/Sweeney15}: 
innocuous-looking fields can be linked to other information sources for reidentification.
The recent success of many language models (LMs) has motivated security researchers to devise advanced privacy attacks.
\citet{corr/abs-2012-07805} recover texts from (a single document of) the \emph{training data} via querying to an LM pretrained from it. 
\citet{sp/PanZJY20} and \citet{ccs/SongR20} target the text embedding, \eg, revealing from an encoded \emph{query} to an NLP service.

Emerging NLP works focus on only specific document-level (statistical) features~\cite{sigir/WeggenmannK18}
or producing private text representations~\cite{nips/XieDDHN17,emnlp/CoavouxNC18,emnlp/ElazarG18,acl/LiBC18} as initial solutions to the first issue above on training-data privacy. 
However, the learned representations are \emph{not} human-readable, which makes \emph{transparency} (\eg, required by GDPR)
questionable: an average user may not have the technical know-how to verify whether sensitive attributes have been removed or not.
Moreover, consider the whole NLP pipeline, the learned representations often entail extra modeling or non-trivial changes to existing NLP models, which take dedicated engineering efforts. 

\begin{figure}[t]
    \centering
    \includegraphics[width=\linewidth]{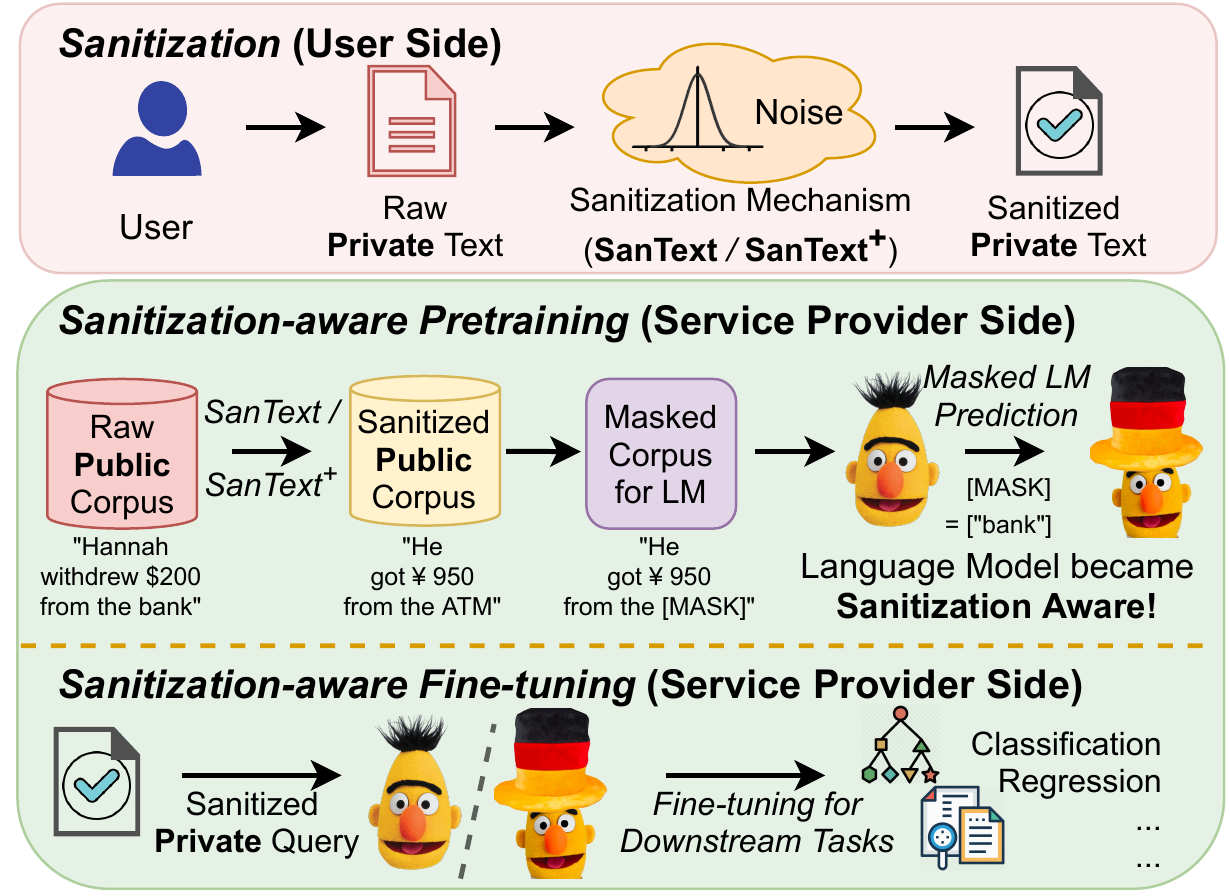}
    \caption{Workflow of our PPNLP pipeline, including the user-side sanitization and the service provider-side NLP modeling with pretraining/fine-tuning
    }
    \label{fig:model}
    \vspace{-15pt}
\end{figure}

\subsection{Sanitizing Sensitive Texts, Naturally}
With this state-of-affairs of the security and the NLP research,
we deem it better to address privacy from the root, \ie, directly producing sanitized text documents. 
Being the most native format, they incur minimal changes to existing NLP pipelines.
Being human-readable, they provide transparency (to privacy-concerning training-data contributors) and explainability 
(\eg, to linguists who might find the need for investigating how the training data contribute to a certain result). 
Moreover, it naturally extends the privacy protection to the inference phase. Users can apply our sanitization mechanism before sending queries (\eg, medical history) to the NLP service provider (\eg, diagnosis services). 

Conceptually, we take a natural approach 
-- we sanitize text documents into also (sanitized) text documents.
This is in great contrast to the typical ``post-processing'' for injecting noises either to gradients in training (a deep neural network)~\cite{iclr/McMahanRT018}
or the ``cursed'' \emph{high-dimensional} text representations~\cite{emnlp/LyuHL20,sigir/LyuLHX20,wsdm/FeyisetanBDD20}.
It also leads to our $O(1)$ efficiency, freeing us from re-synthesizing the document word-by-word
via nearest neighbor searches over
the entire vocabulary space~$\mathcal{V}$~\cite{wsdm/FeyisetanBDD20}.

Technically, we aim for the \emph{de facto} standard of local differential privacy (LDP)~\citep{focs/DuchiJW13} to sanitize the user data \emph{locally}, based on which the service provider can build NLP models without touching any raw data. DP has been successful in many contexts, \eg, location privacy and survey statistics~\citep{ccs/AndresBCP13,uss/Murakami019}. However, DP text analytics appears to be a difficult pursuit (as discussed, also see Section~\ref{sect:related}), which probably explains why there are only a few works in DP-based text sanitization.
In high-level terms, text is rich in semantics, differentiating it from other more structured data.

Our challenge here is to develop \emph{efficient} and effective mechanisms that \emph{preserve the utility} of the text data with \emph{provable and quantifiable} privacy guarantees. 
Our insight is the formulation of a new LDP notion named \emph{Utility-optimized Metric LDP (UMLDP)}.
We attribute our success to the focus of UMLDP on protecting what matters (sensitive words) via ``sacrificing'' the privacy of non-sensitive (common) words. To achieve UMLDP, our mechanism directly samples noises on tokens.

Our result in this regard is already better than the state-of-the-art LDP solution producing sanitized documents~\cite{wsdm/FeyisetanBDD20} -- we got
$28$\% gain in accuracy
on the SST-2 dataset~\cite{iclr/WangSMHLB19} on average at the same privacy level 
(\ie, the same LDP parameter)
while being much more efficient (${\sim}60\times$ faster, precomputation included).

\subsection{Privacy-Preserving NLP, Holistically}
Text sanitization is essential but just one piece of the whole privacy-preserving NLP (PPNLP) pipeline. 
While most prior works in text privacy are motivated by producing useful data for some downstream tasks, the actual text analytics are hardly explored, not to say in the context of many recent general-purpose language models.
As simple as it might seem, we start to see design choices that can be influential. 
Specifically, our challenge here is to adapt the currently dominating pretraining-fine-tuning paradigm (\eg, BERT~\cite{devlin2019bert}) over sanitized texts for building the model.

Our design is to build in privacy at the root again, in contrast to the afterthought approach.
We found it beneficial to sanitize even the public data before feeding them to training. 
It is not for protecting the public data per se.
The intuition here is that it ``prepares'' the model to work with sanitized queries, 
which explains our eventual (slight) increase in accuracy while \emph{additionally ensuring privacy}.

Specifically, we propose a sanitization-aware pretraining procedure (Figure~\ref{fig:model}). We first use our mechanisms to sanitize the public texts, mask the sanitized texts (as in BERT), and train the LM by predicting a \texttt{MASK} position as its \emph{original unsanitized token}. LMs preptrained with our sanitization-aware procedure are expected to be more robust to noises in the sanitized texts and achieve better utility when fine-tuning on downstream tasks.

We conduct experiments on three representative NLP tasks to empirically confirm that our proposed PPNLP pipeline preserves both utility and privacy. It turns out that our sanitization-based pretraining (using only $1/6$ of data used in the original BERT pretraining) can even improve the utility of NLP tasks while maintaining privacy comparable to the original BERT. 
Note that there is an inherent tension between utility and privacy, and privacy attack is also inference in nature. 
To empirically demonstrate the privacy aspect of our pipeline, \ie, it does not make our model a more powerful tool helping the attacker, 
we also conduct the ``mask token inference" attack on private texts, 
which infers the masked token given its context based on BERT.
As a highlight, our base solution
\baseDP improves the defense rate by $20\%$ with only a $4\%$ utility loss on the SST-2 dataset. 
We attribute our surprising
result of mostly helping only good guys to our natural approach: to avoid the model memorizing sensitive texts ``too well,'' we fed it with sanitized text.

\section{Related Work}\label{sect:related}
\noindent\textbf{Privacy risks in NLP.}
A taxonomy of attacks that recover sensitive attributes or partial raw text from text embeddings output by popular LMs has been proposed~\cite{ccs/SongR20}, without any assumptions on the structures or patterns in input text.
\citet{corr/abs-2012-07805} also show a powerful black-box attack on 
\mbox{GPT-2}~\citep{radford2019language} that extracts verbatim texts of training data.
Defense with rigorous guarantees (DP) is thus vital.

\smallskip
\noindent\textbf{Differential privacy and its application in NLP.}
DP~\citep{icalp/Dwork06} has emerged as the \emph{de facto} standard for statistical analytics~\citep{uss/WangBLJ17,sp/WangLJ18,sigmod/CormodeKS18}.
A few efforts inject high-dimensional DP noise into text representations~\citep{icdm/FeyisetanDD19,wsdm/FeyisetanBDD20,emnlp/LyuHL20,sigir/LyuLHX20}.
The noisy representations are not human-readable
and not directly usable by existing NLP pipelines, \ie, they consider a different problem not directly comparable to ours.
More importantly, they fail to strike a nice privacy-utility balance due to ``the curse of dimensionality,'' \ie, the magnitude of the noise is too large for high-dimensional token embedding, and thus it becomes exponentially less likely to find a noisy embedding close to a real one on every dimension.
This may also explain why an earlier work focuses on document-level statistics only, \eg, term-frequency vectors~\cite{sigir/WeggenmannK18}.

Our approaches produce natively usable sanitized texts via directly sampling a substitution for each token from a precomputed distribution (to be detailed in Section~\ref{sect:ppnlp}), circumventing the dimension curse and striking a privacy-utility tradeoff while being much more efficient.
A concurrent work~\cite{corr/abs-2104-07504} also considers the whole NLP pipeline, but it still builds on the token-projection approach~\cite{wsdm/FeyisetanBDD20}.

\smallskip
\noindent\textbf{Privacy-preserving text representations.} 
Learning private text representations via adversarial training is also an active area~\citep{nips/XieDDHN17,emnlp/CoavouxNC18,emnlp/ElazarG18,acl/LiBC18}.
An adversary is trained to infer sensitive information jointly with the main model, while the main model is trained to maximize the adversary’s loss and minimize the primary learning objective. While we share the same general goal, 
our aim is not such representations (similar to those with DP) but to release sanitized text for general purposes.

\section{Defining (Local) Differential Privacy}
\label{sect:pre}
Suppose each user holds a document $D = \langle x_i \rangle_{i = 1}^L$ of $L$ tokens (which can be a character, a subword, a~word, or an n-gram), where $x_i$ is from a vocabulary $\V$ of size $|\V|$.
For privacy, each user derives a sanitized version $\hat{D}$ by running a common text sanitization mechanism $\M$ over $D$ on local devices.
Specifically, $\M$ works by replacing every token $x_i$ in $D$ with a substitution $y_i \in \V$, assuming that $x_i$ itself is unnecessary for NLP tasks while its semantics should be preserved for high utility.
The output $\hat{D}$ is then shared with an NLP service provider. 

We consider a typical threat model in which each user does not trust any other party and views them as an attacker with access to $\hat{D}$ in conjunction with any auxiliary information (including $\M$).

\subsection{(Variants of) Local Differential Privacy}
Let $\X$ and $\Y$ be the input and output spaces.
A randomized mechanism $\M: \X \rightarrow \Y$ is a probabilistic function that assigns a random output $y \in \mathcal{Y}$ to an input $x \in \mathcal{X}$.
Every $y$ induces a probability distribution on the underlying space.
For sanitizing text, we set both $\X$ and $\Y$ as the vocabulary $\V$.

\begin{definition}[$\epsilon$-LDP~\citep{focs/DuchiJW13}]
\label{def:ldp}
Given a privacy parameter $\epsilon \geq 0$, $\mathcal{M}$ satisfies $\epsilon$-local differential privacy ($\epsilon$-LDP) if, for any $x, x', y \in \V$,
\begin{align*}
  \Pr[\mathcal{M}(x) = y] \leq e^\epsilon \cdot \Pr[\mathcal{M}(x') = y].
\end{align*}
\end{definition}

Given an observed output $y$, from the attacker's view, the likelihoods $y$ is derived from $x$ and $x'$ are similar.
A smaller $\epsilon$ means better privacy due to a higher indistinguishability level of output distributions, yet the outputs retain less utility.

$\epsilon$-LDP is a very strong privacy notion for its homogeneous protection over all input pairs.
However, this is also detrimental to the utility: no matter how unrelated $x$ and $x'$ are, their output distributions must be similar.
As a result, a sanitized token $y$ may not (approximately) capture the semantics of its input $x$, degrading the downstream tasks.

\smallskip
\noindent\textbf{LDP over metric spaces.} 
To capture semantics, we borrow the relaxed notion of Metric-LDP (MLDP)~\citep{csfw/Alvim0PP18} originally proposed for location privacy~\citep{ccs/AndresBCP13} with the distance metric $d(\cdot, \cdot)$ between two locations (\eg, Manhattan distance~\cite{pet/ChatzikokolakisABP13}).

\begin{definition}[MLDP]
\label{def:mldp}
Given $\epsilon \geq 0$ and a distance metric $d: \V \times \V \rightarrow \mathbb{R}_{\geq 0}$ over $\V$, $\mathcal{M}$ satisfies MLDP or $\epsilon \cdot d(x,x')$-LDP if, for any $x,x',y \in \V$, 
\begin{align*}
  \Pr[\mathcal{M}(x) = y] \leq e^{\epsilon \cdot d(x,x')} \cdot \Pr[\mathcal{M}(x') = y].
\end{align*}
\end{definition}

When $d(x,x') = 1~\forall x \neq x'$, MLDP becomes LDP.
For MLDP, the indistinguishability of output distributions is further scaled by the distance between the respective inputs.
Roughly, the effect of $\epsilon$ becomes ``adaptive.''
To apply MLDP, one needs to carefully define the metric $d$
(see Section~\ref{sect:base}).

\smallskip
\noindent\textbf{Incorporating ULDP to further improve utility.}
Utility-optimized LDP~\citep{uss/Murakami019} (ULDP) also relaxes LDP, which was originally proposed for aggregating ordinal responses.
It exploits the fact that different inputs have different sensitivity levels to achieve higher utility.
By assuming that the input space is split into \emph{sensitive} and \emph{non-sensitive} parts, ULDP achieves a privacy guarantee equivalent to LDP for \emph{sensitive} inputs.

In our context, more formally speaking, let $\V_S \subseteq \V$ be the set of sensitive tokens common to all users, and $\V_N = \V \setminus \V_S$ be the set of remaining tokens.
The output space $\V$ is split into the \emph{protected} part $\V_P \subseteq \V$ and the \emph{unprotected} part $\V_U = \V \setminus \V_P$.

The image of $\V_S$ is restricted to $\V_P$, \ie,
a sensitive $x \in \V_S$ can only be mapped to a protected $y \in \V_P$.
For text, we can set $\V_S = \V_P$ for simplicity.
While a non-sensitive $x \in V_N$ can be mapped to $\V_P$, every $y \in \V_U$ must be mapped from 
$\V_N$, 
which helps to improve the utility.

\subsection{Our New Utility-optimized MLDP Notion}
Among many variants of (L)DP notions, we found the above two 
variants 
(\ie, ULDP and MLDP) provide useful insight in quantifying semantics and privacy of text data. 
We thus formulate the new privacy notion of utility-optimized MLDP (UMLDP).

\begin{definition}[UMLDP]
\label{def:umldp}
Given $\V_S \cup \V_N = \V$, two privacy parameters $\epsilon, \epsilon_0 \geq 0$, and a distance metric $d: \V \times \V \rightarrow \mathbb{R}_{\geq 0}$, $\M$ satisfies $(\V_S, \V_P, \epsilon, \epsilon_0)$-UMLDP, if
\\\smallskip\noindent
i) for any $x, x' \in \V$ and any $y \in \V_P$, we have 
$$\Pr[\mathcal{M}(x)=y] \leq e^{\epsilon d(x,x') + \epsilon_0} \Pr[\mathcal{M}(x')=y];$$
ii) for any $y \in \V_U$, \ie,
from an unprotected 
set $\V_U$ where $\V_U \cap \V_P = \emptyset$, there is an $x \in \V_N$ such that 
\begin{align*}
\Pr[\mathcal{M}(x)=y] & > 0, 
\\ \Pr[\mathcal{M}(x')=y] & = 0 
\ \forall x' \in \V \setminus \{x\}.
\end{align*}
\end{definition}

\begin{figure}[!t]
\centering
\includegraphics[width=0.45\linewidth]{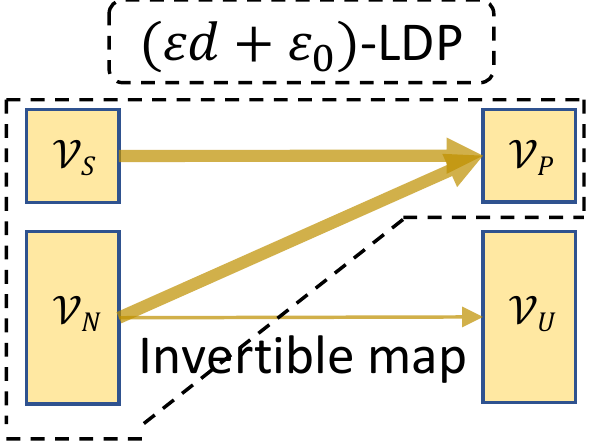}
\vspace{-2mm}
\caption{Overview of our new UMLDP notion}
\vspace{-5mm}
\label{fig:metric_uldp}
\end{figure}

Figure~\ref{fig:metric_uldp} summarizes the treatment of UMLDP.
It exhibits ``invertibility,'' \ie, $y \in \V_U$ must be ``noise-free'' and mapped deterministically.
Apart from generalizing $\epsilon$ in the ULDP definition 
(recalled in Appendix~\ref{sect:uldp})
into $\epsilon d(x,x')$, 
we incorporate an additive bound~$\epsilon_0$ due to the invertibility, 
which makes the derivation of $\epsilon$ easier. 
Looking ahead, $\epsilon_0$ would appear naturally in the analysis of our UMLDP mechanism for the invertible case.

UMLDP (and MLDP), as an LDP notion, satisfies the \emph{composability} and \emph{free post-processing}.
The former means that the sequential execution of $\epsilon_1$-LDP and $\epsilon_2$-LDP mechanisms satisfies $(\epsilon_1+\epsilon_2)$-LDP, \ie, $\epsilon$ can be viewed as the privacy ``budget” of a sophisticated task comprising multiple subroutines, each consumes a part of $\epsilon$ such that their sum equals $\epsilon$.
The latter means further processing the mechanism outputs incurs no extra privacy loss.

\section{Our Privacy-Preserving NLP Pipeline}
\label{sect:ppnlp}
\subsection{Overview}
We propose two token-wise sanitization methods with (U)MLDP: \baseDP and \baseDPp, 
which build atop a variant of the exponential mechanism (EM)~\citep{focs/McSherryT07} 
over the ``native'' text tokens as both input and output spaces to avoid going to the ``cursed dimensions'' of token embeddings.
EM samples a replacement~$y$ for an input $x$ based on an exponential distribution, with more ``suitable'' $y$'s sampled with higher probability (detailed below).
It is well-suited for (U)MLDP by considering the ``suitability'' as how well the semantics of $x$ is preserved for the downstream tasks (run over the sanitized text $y$) to remain accurate.

To quantify this, we utilize an embedding model mapping tokens into a real-valued vector space.
The semantic similarity among tokens can then be measured via the Euclidean distance between their corresponding vectors.
Our base design \baseDP outputs $y$ with probability inverse proportional to the distance between $x$ and $y$: the shorter the distance, the more semantically similar they are.
\baseDPp considers some tokens $\V_N$ in $\V$ are non-sensitive, and runs \baseDP over 
the sensitive part $\V_S = V \setminus \V_N$
(\ie, it degenerates to \baseDP if $\V_S = \V$).
For $\V_N$, we tailor a probability distribution to provide UMLDP as a whole.

With \baseDP or \baseDPp, each user sanitizes $D$ into $\hat{D}$ and uploads it to the service provider for performing any NLP task built atop a pretrained LM, \eg, BERT.
Typically, the task pipeline consists of an embedding layer, an encoder module, and task-specific layers, \eg, for classification.

Without the raw text, 
the utility can degrade;
we thus propose two approaches for improving it.
The first one is to pretrain only the encoder on the sanitized public corpus to adapt to the noise.
It is optional if pretraining is deemed costly.
The second is to fine-tune the full pipeline on $\hat{D}$'s, which updates both the encoder and task layers.

\subsection{Base Sanitization Mechanism: \baseDP}
\label{sect:base}

\setlength{\textfloatsep}{0pt}
\begin{algorithm}[t!]
\caption{Base Mechanism \baseDP}
\label{alg:baseline}
\LinesNumbered
\KwIn{A private document $D = \langle x_i \rangle_{i = 1}^L$, and a privacy parameter $\epsilon \geq 0$}
\KwOut{Sanitized document $\hat{D}$}
Derive token vectors $\phi(x_i)$ for $i \in [1,L]$\;
\For{$i = 1, \ldots, L$}{
Run $\M(x_i)$ to sample a sanitized token $y_i$ with probability defined in Eq.~(\ref{eq:baseline})\; 
}
Output sanitized $\hat{D}$ as $\langle y_i \rangle_{i = 1}^L$\;
\end{algorithm}

In NLP, a common step is to employ an embedding model\footnote{We assume that it has been trained on a large public corpus and shared by all users.}
mapping semantically similar tokens to close vectors in a Euclidean space. Concretely, an embedding model is an injective mapping $\phi: \V \rightarrow \mathbb{R}^m$, for dimensionality $m$.
The distance between any two tokens $x$ and $x'$ can be measured by the Euclidean distance of their embeddings: $d(x,x') = d_{\euc}(\phi(x),\phi(x'))$.
As $\phi$ is injective, $d$ satisfies the axioms of a distance metric.

Algorithm~\ref{alg:baseline} lists the pseudo-code of \baseDP for sanitizing a private document $D$ at the user side.
The first step is to use $\phi$ to derive token embeddings of each token\footnote{For easy presentation, we omit the subscript $i$ later.} 
$x$ in $D$.
Then, for each $x$, we run $\M(x)$ to sample a sanitized $y$
with probability
\begin{align}
\label{eq:baseline}
 \Pr[\M(x)=y] = C_{x} \cdot e^{-\frac{1}{2} \epsilon \cdot d_{\euc}(\phi(x),\phi(y))}
\end{align}
where 
$C_x = (\sum_{y' \in \V}e^{-\frac{1}{2} \epsilon \cdot d_{\euc}(\phi(x),\phi(y'))})^{-1}$.

The smaller $d_{\euc}(\phi(x),\phi(y))$, the more likely $y$ is to replace $x$. 
To boost the sanitizing efficiency, we can precompute a $|\V| \times |\V|$ probability matrix, where each entry $(i,j)$ denotes the probability of outputting $y_j$ on input $x_i$, upon obtaining $\phi(x)$ for $\forall x \in \V$.
Lastly, the sanitized $\hat{D} = \langle y_i \rangle_{i = 1}^L$ can be released to the service provider for NLP tasks.

\subsection{Enhanced Mechanism: \baseDPp}
\label{sect:enhanced}

\setlength{\textfloatsep}{0pt}
\begin{algorithm}[t!]
\caption{Enhanced \baseDPp}
\label{alg:enhanced}
\LinesNumbered
\KwIn{A private document $D = \langle x_i \rangle_{i = 1}^L$, a privacy parameter $\epsilon \geq 0$, probability $p$ for a biased coin, and sensitive $\V_S$}
\KwOut{Sanitized document $\hat{D}$}
Derive token vectors $\phi(x_i)$ for $i \in [1,L]$\;
\For{$i = 1, \ldots, L$}{
\eIf{$x_i \in \V_S$}{
Sample a substitution $y_i \in\V_P = \V_S$ with probability given in Eq.~(\ref{eq:baseline})\algorithmiccomment{Run \baseDP over $\V_S$ and $\V_P$}\;}
{Output $y_i = x_i$ with prob. $(1-p)$; or $y_i \in\V_P$ with prob. in Eq.~(\ref{eq:enhanced});}
}
Output sanitized $\hat{D}$ as $\langle y_i \rangle_{i = 1}^L$\;
\end{algorithm}

In \baseDP, all tokens in $\V$ are treated as sensitive, which leads to excessive protection and utility loss. 
Following the less-is-more principle,
we divide $\V$ into $\V_S$ and $\V_N$, and focus on protecting~$\V_S$.

Observing that most frequently used tokens (\eg, a/an/the) are 
non-sensitive to virtually all users, we use token frequencies for division. 
A simple strategy, which is also used in our experiments, is to mark the top $w$ of low-frequency tokens (according to a certain corpus) as $\V_S$, where $w$ is a tunable parameter.
Looking ahead, this ``basic'' method already showed promising results. 
(Further discussion can be found in Section~\ref{sect:sensitivity}).

Algorithm~\ref{alg:enhanced} lists the pseudo-code of \baseDPp with $\V_S=\V_P$ and $\V_N=\V_U$ shared by all users.
The first step,
as in \baseDP, 
is to derive the token embeddings in $D$.
Then, for each token $x$, if it is in $\V_S$, we sample its substitution $y$ from $\V_P$ with probability given in Eq.~(\ref{eq:baseline}).
(This is equivalent to running \baseDP over $\V_S$ and $\V_P$.)
For $x \in \V_N$, we toss a biased coin. With probability $(1 - p)$, we output $y$ as $x$ (\ie, the ``invertibility''). 
Otherwise, 
we sample $y \in \V_P$ with probability
\begin{align}
\label{eq:enhanced}
 \Pr[\M(x)=y] = p \cdot C_{x} \cdot e^{-\frac{1}{2} \epsilon \cdot d_{\euc}(\phi(x),\phi(y))}
\end{align}
where $C_x = (\sum_{y' \in \V_P}e^{-\frac{1}{2} \epsilon \cdot d_{\euc}(\phi(x),\phi(y'))})^{-1}$.

As in \baseDP, we can also precompute two $|\V_S| \times |\V_P|$ and $|\V_N| \times |\V_P|$ probability matrices, which correspond to Eq.~(\ref{eq:baseline}) and (\ref{eq:enhanced}), for optimizing the sanitizing efficiency.
Lastly, the sanitized $\hat{D}$ of $\langle y \rangle_{i = 1}^L$ can be released to the service provider.

\begin{theorem}
\label{thm:base}
Given $\epsilon \geq 0$ and $d_{\euc}$ over the embedding space $\phi$ of~$\V$, \baseDP satisfies MLDP.
\end{theorem}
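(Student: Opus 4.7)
The plan is to establish the MLDP inequality in Definition~\ref{def:mldp} directly from the closed-form expression for $\Pr[\M(x)=y]$ given in Eq.~(\ref{eq:baseline}). Fix arbitrary $x, x', y \in \V$ and form the ratio
\begin{align*}
\frac{\Pr[\M(x)=y]}{\Pr[\M(x')=y]} = \frac{C_x}{C_{x'}} \cdot \frac{e^{-\frac{1}{2}\epsilon\, d_{\euc}(\phi(x),\phi(y))}}{e^{-\frac{1}{2}\epsilon\, d_{\euc}(\phi(x'),\phi(y))}}.
\end{align*}
The proof amounts to bounding each of the two factors by $e^{\frac{1}{2}\epsilon\, d_{\euc}(\phi(x),\phi(x'))}$, so that their product yields $e^{\epsilon\, d(x,x')}$ as required. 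The factor $\tfrac{1}{2}$ baked into the sampling distribution is chosen precisely so these two contributions add up cleanly, which is the standard calibration trick for the exponential mechanism on a metric space.

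For the exponential factor, I would apply the reverse triangle inequality to the Euclidean metric on $\mathbb{R}^m$: $d_{\euc}(\phi(x'),\phi(y)) - d_{\euc}(\phi(x),\phi(y)) \leq d_{\euc}(\phi(x),\phi(x'))$, which immediately bounds the second factor above by $e^{\frac{1}{2}\epsilon\, d_{\euc}(\phi(x),\phi(x'))}$. For the ratio of normalization constants, I would expand
\begin{align*}
\frac{C_x}{C_{x'}} = \frac{\sum_{y' \in \V} e^{-\frac{1}{2}\epsilon\, d_{\euc}(\phi(x'),\phi(y'))}}{\sum_{y' \in \V} e^{-\frac{1}{2}\epsilon\, d_{\euc}(\phi(x),\phi(y'))}}
\end{align*}
and apply the (forward) triangle inequality termwise to the numerator, giving $e^{-\frac{1}{2}\epsilon\, d_{\euc}(\phi(x'),\phi(y'))} \leq e^{\frac{1}{2}\epsilon\, d_{\euc}(\phi(x),\phi(x'))} \cdot e^{-\frac{1}{2}\epsilon\, d_{\euc}(\phi(x),\phi(y'))}$ for every $y' \in \V$. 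Summing over $y'$ and cancelling the denominator yields the matching bound $e^{\frac{1}{2}\epsilon\, d_{\euc}(\phi(x),\phi(x'))}$. Multiplying the two bounds closes the argument.

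Before concluding, I would briefly justify that $d(x,x') := d_{\euc}(\phi(x),\phi(x'))$ is a legitimate metric on $\V$ so that Definition~\ref{def:mldp} applies: non-negativity, symmetry, and the triangle inequality are inherited from $d_{\euc}$ on $\mathbb{R}^m$, while the identity-of-indiscernibles axiom uses the injectivity of the embedding $\phi$ (explicitly noted in Section~\ref{sect:base}). This justification was already flagged in the text, so it is a one-line remark rather than a real step.

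Honestly, there is no serious obstacle: this is the textbook Euclidean-metric exponential-mechanism analysis, and the main thing to be careful about is getting the two uses of the triangle inequality pointed in the right direction (forward on the normalizer, reverse on the score) so that both contribute $\tfrac{1}{2}\epsilon\, d(x,x')$ with the correct sign. No additive slack $\epsilon_0$ appears here because every output has strictly positive probability under $\M(x)$ for every input $x$; the invertibility-induced $\epsilon_0$ will only enter for \baseDPp.
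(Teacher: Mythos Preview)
Your proposal is correct and matches the paper's proof essentially line for line: both split the ratio into the exponential-score factor and the normalizer ratio $C_x/C_{x'}$, then apply the triangle inequality to each to obtain two contributions of $e^{\frac{1}{2}\epsilon\, d(x,x')}$ whose product gives the bound. The only addition in the paper's version is a short closing remark extending from a single token to documents of length $L>1$ via composability (independence of per-token sanitizations), which you may wish to append.
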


\begin{theorem}
\label{thm:enhanced}
Given $(\V_S = \V_P) \subseteq \V$, $\epsilon \geq 0$, $\epsilon_0 = \ln{\frac{1}{p}} \geq 0$, and $d_{\euc}$ over the embedding space $\phi$ of~$\V$, \baseDPp satisfies $(\V_S, \V_P, \epsilon, \epsilon_0)$-UMLDP.
\end{theorem}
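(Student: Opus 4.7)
The plan is to verify the two clauses of the UMLDP definition in turn. Since $\V_S = \V_P$ implies $\V_U = \V_N$, the invertibility clause (ii) is immediate from the construction: for any $y \in \V_U$, pick $x = y \in \V_N$; the biased-coin branch gives $\Pr[\M(y) = y] = 1 - p > 0$ (using $p < 1$, which is guaranteed by $\epsilon_0 = \ln(1/p) \geq 0$). Any other $x' \in \V_S$ can never emit $y \in \V_U$, and any other $x' \in \V_N \setminus \{y\}$ only emits itself within $\V_U$. So $\Pr[\M(x')=y] = 0$ for all $x' \neq y$, as required.

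For the indistinguishability clause (i), I would fix $y \in \V_P$ and split into four cases based on whether $x, x' \in \V_S$ or $\V_N$. The core calculation is the same in every case: writing out the ratio $\Pr[\M(x)=y]/\Pr[\M(x')=y]$, the exponent contributes at most $e^{\frac12 \epsilon d(x,x')}$ by the triangle inequality applied to $d_{\euc}(\phi(\cdot), \phi(y))$, and the normalization ratio $C_x / C_{x'}$ contributes another factor of at most $e^{\frac12 \epsilon d(x,x')}$ by applying the same triangle inequality inside the sum $\sum_{y' \in \V_P} e^{-\frac12 \epsilon d_{\euc}(\phi(\cdot), \phi(y'))}$ termwise. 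Multiplying gives $e^{\epsilon d(x,x')}$ for the pure $\V_S$--$\V_S$ and $\V_N$--$\V_N$ cases, where the $p$ factors cancel.

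The only case that actually uses $\epsilon_0$ is $x \in \V_S$, $x' \in \V_N$: here the numerator lacks a $p$ factor while the denominator has one, contributing an extra $1/p = e^{\epsilon_0}$ and giving the target bound $e^{\epsilon d(x,x') + \epsilon_0}$. The reverse case $x \in \V_N$, $x' \in \V_S$ is strictly easier, yielding $p \cdot e^{\epsilon d(x,x')} \leq e^{\epsilon d(x,x') + \epsilon_0}$. I would also note that restricting the normalization $C_x$ of Eq.~(\ref{eq:enhanced}) to $\V_P$ (rather than all of $\V$) is what makes the triangle-inequality argument on the sum go through cleanly for mixed inputs, since both $C_x$ and $C_{x'}$ sum over the same index set $\V_P$.

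The main obstacle is nothing deep, just bookkeeping: one must be careful that the $p$ in Eq.~(\ref{eq:enhanced}) is built into $\Pr[\M(x) = y]$ (not a conditional probability given the coin outcome) so that the ratios in the mixed case track correctly, and that the definition of $C_x$ differs from the one in Eq.~(\ref{eq:baseline}) (summing only over $\V_P$). Once these points are pinned down, Theorem~\ref{thm:base} is in fact recovered as the degenerate case $\V_S = \V_P = \V$, $\epsilon_0 = 0$, since then only Case~1 arises.
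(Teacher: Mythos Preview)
Your proposal is correct and follows essentially the same approach as the paper's proof: a case split on whether $x,x'$ lie in $\V_S$ or $\V_N$, with the triangle inequality applied to both the exponent and the normalization ratio $C_x/C_{x'}$, and the $1/p$ factor appearing in the mixed case to supply $\epsilon_0$. Your treatment is in fact slightly more thorough than the paper's, which handles clause~(ii) in a single terse sentence and omits the reverse mixed case $x\in\V_N,\,x'\in\V_S$; your observation that both normalizations sum over the same index set $\V_P$ is exactly the point that makes the mixed-case bound clean.
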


Their proofs are in Appendix~\ref{sect:proof}.

\subsection{NLP over Sanitized Text}
\label{sect:nlp}
With $\hat{D}$'s (shared by the users), the service provider can perform any NLP task.
In this work, we focus on those built on a pretrained LM, and in particular, we study BERT as an example due to its wide adoption and superior performance.
The full NLP pipeline is deployed at the service provider.

Given a piece of (sanitized) text, the embedding layer maps it to a sequence of token embeddings.
The encoder computes a sequence representation from the token embeddings, allowing task-specific layers to make predictions.
For example, the task layer could be a feed-forward neural network for multi-label classification of a diagnosis system.

The injected noise deteriorates the performance of downstream tasks as the service provider cannot access the raw texts $\{D\}$.
To mitigate this, we propose two approaches -- pretraining the encoder and fine-tuning the full pipeline, which allow the tasks to be ``adaptive'' to the noise to some extent.

\smallskip
\noindent\textbf{Pretraining BERT over sanitized public corpus.} Besides $\hat{D}$'s, the service provider can also obtain a massive amount of text that is publicly available (say, the English Wikipedia).
It also has access to the sanitization mechanisms, and it can produce the sanitized public text (as how users produce $\hat{D}$'s).

Our key idea is to let the service provider pretrain the encoder (\ie, BERT) over the sanitized public text, making it more ``robust'' in handling $\hat{D}$'s.
We thus initialize the encoder with the original BERT checkpoint and conduct further pretraining with an adapted masked language model (MLM) loss. 
In more detail, the adapted MLM objective is to predict the \emph{original masked tokens} given the sanitized context instead of the one from the raw public text.
We note that this is beneficial for improving the task utility, yet may breach the user privacy as the objective learns to ``recover'' the original tokens or semantics.
In Section~\ref{sect:exp_pretrain}, our results will show that such pretrained BERT indeed improves accuracy, with comparable privacy as in original BERT.

\smallskip
\noindent\textbf{Fine-tuning the full NLP pipeline.}
After pretraining BERT using sanitized public text, the service provider can further improve the efficacy of downstream tasks by fine-tuning the full pipeline.
We assume that the ground-truth labels are available to the service provider, say, 
inferring from $\hat{D}$'s when they can preserve similar semantics to the raw text.
Then, the sanitized text-label pairs are used for training/fine-tuning downstream task models, with gradients back-propagated to update the parameters of both the encoder and task layer. 
We leave more realistic/complex labeling processes based on sanitized texts as future work.

\subsection{Definition of ``Sensitivity''}\label{sect:sensitivity}
Simply treating the top $w$ of least frequent tokens (\eg, according to a public reference corpus) as the sensitive token set already led to promising results (see Section~\ref{sect:exp_comparison}).
By this definition, stop words are mostly non-sensitive (\eg, for $w=0.9$ over the sentiment classification dataset we used, ${\sim}98\%$ of the stop words are deemed non-sensitive).
For context-specific corpus, this strategy is better than merely using stop words, 
\eg, breast cancer becomes non-sensitive among breast-cancer patients.

Sophisticated machine-learning approaches or other heuristics could also be considered, 
\eg, training over context-specific reference corpus
or identifying tokens with personal (and hence sensitive) information
(\eg, names).
We leave as future work.

Moreover, the definition of sensitivity may vary across users. Some may consider a token deemed non-sensitive by most other users sensitive.
The original ULDP work~\citep{uss/Murakami019} has discussed a personalized mechanism that preprocesses such tokens by mapping them to a set of semantic tags, which are the same for all users. These tags will be treated as sensitive tokens for the ULDP mechanism. Apparently, this approach is application-specific and may not be needed in some applications; hence we omit it in this work.


\begin{table*}[t]
\resizebox{\linewidth}{!}{%
\begin{tabular}{l|c|c|c|c|c|c|c|c|c}
\hline
\multirow{2}{*}{\textbf{Mechanisms}} & \multicolumn{3}{c|}{\textbf{SST-2}} & \multicolumn{3}{c|}{\textbf{MedSTS}} & \multicolumn{3}{c}{\textbf{QNLI}} \\ \cline{2-10} 
 &   $\epsilon=1$ & $\epsilon=2$ & $\epsilon=3$ & $\epsilon=1$ & $\epsilon=2$ & $\epsilon=3$ & $\epsilon=1$ & $\epsilon=2$ & $\epsilon=3$ \\ \hline\hline
 Random &
$0.4986$ &	$0.4986$	&$0.4986$	&	$0.0196$	&	$0.0196$	&	$0.0196$ & $0.5152$  & $0.5152$ &$0.5152$
\\ \cline{1-10} 
\citet{wsdm/FeyisetanBDD20} &$0.5099$  &$0.5143$  &$0.5345$  &$0.0201$  &$0.0361$  &$0.0452$ &$0.5162$  & $0.5256$ &$0.5333$ \\ \cline{1-10} 
  \baseDP &
$0.5101$ &	$0.5838$	&	$0.8374$	&	$0.0351$	&	$0.5392$	&	$0.8159$ & $0.5372$  & $0.5598$  &$0.8116$
\\ \cline{1-10} 
 \baseDPp & $\mathbf{0.7796}$ & $\mathbf{0.7943}$ & $\mathbf{0.8516}$ & $\mathbf{0.4965}$ & $\mathbf{0.7082}$ & $\mathbf{0.8162}$ & $\mathbf{0.7699}$ & $\mathbf{0.7760}$ & $\mathbf{0.8131}$\\ \hline \hline
Unsanitized  & \multicolumn{3}{c|}{$0.9251$} & \multicolumn{3}{c|}{$0.8527$} & \multicolumn{3}{c}{$0.9090$} \\ \hline
\end{tabular}
}
\vspace{-5pt}
\caption{Utilities comparison of sanitization mechanisms under similar privacy levels using the GloVe embedding
}
\label{tbl:comparisonTask}
\vspace{-15pt}
\end{table*}

\section{Experiments}
\subsection{Experimental Setup}
We consider three representative downstream NLP tasks (datasets) with privacy implications.

\smallskip
\noindent
\textbf{Sentiment Classification (SST-2)}. When people write online reviews, especially the negative ones, they may worry about having their identity traced via writing too much that may provide hints of authorship or linkage to other online writings.
For this task, we use the preprocessed version in GLUE benchmark~\cite{iclr/WangSMHLB19} of  (binary) Stanford Sentiment Treebank (SST-2) dataset~\cite{emnlp/SocherPWCMNP13}. \emph{Accuracy} (w.r.t. the ground truth included in the dataset) is used as the evaluation metric.

\smallskip
\noindent
\textbf{Medical Semantic Textual Similarity (MedSTS).} Automated processing of patient records is a significant research direction, and one such task is computing the semantic similarity between clinical text snippets for the benefit of reducing the cognitive burden. We choose a very recent MedSTS dataset~\cite{wang2018medsts} for this task, which assigns a numerical score to each pair of sentences, indicating the degree of similarity. We report the \emph{Pearson correlation coefficient} (between predicted similarities and human judgments) for this task.

\smallskip
\noindent
\textbf{Question Natural Language Inference (QNLI)}. Question-answering (QA) aims to automatically answer user questions based on documents. We consider a simplified setting of QA, namely QNLI, which predicts whether a given document contains the answer to the question. We use the QNLI dataset from GLUE benchmark~\cite{iclr/WangSMHLB19}.

We implement our sanitized mechanisms using Python and the sanitization-aware training using the Transformers library~\cite{Wolf2019HuggingFacesTS}. We use sanitized data to train and test prediction models for all three tasks.
We either build vocabularies for the tasks using GloVe embeddings~\cite{emnlp/PenningtonSM14} or adopt the same BERT vocabulary~\cite{devlin2019bert}. 
Table~\ref{tbl:vocabSize} 
shows their sizes.
Our sanitization-aware pretraining uses WikiCorpus (English version, a 2006 dump, $600$M words)~\cite{reese2010wikicorpus}. 
We start from the \texttt{bert-base-uncased} (instead of randomly initialized) model to accelerate the pretraining. 

We set the maximum sequence length to $512$, training epoch to $1$, batch size to $6$, 
learning rate to 5e-5, warmup steps to $2000$, and MLM probability to $0.15$. 
Our sanitization-aware fine-tuning uses the \texttt{bert-base-uncased} model for \mbox{SST-2}/QNLI, and \texttt{ClinicalBERT}~\cite{alsentzer2019publicly} for MedSTS. 
We set the maximum sequence length to $128$, training epochs to $3$, 
batch size to $64$ for SST-2/QNLI or $8$ for MedSTS, 
and learning rate to 2e-5 for SST-2/QNLI or 5e-5 for MedSTS.
Other hyperparameters are kept default.
Our hyperparameters followed the transformer library~\cite{Wolf2019HuggingFacesTS} and popular setups in the original dataset literature~\cite{iclr/WangSMHLB19,wang2018medsts}.

\subsection{Comparison of Sanitization Mechanisms}
\label{sect:exp_comparison}
We first compare our \baseDP and \baseDPp with random sanitization and the state-of-the-art of~\citeauthor{wsdm/FeyisetanBDD20} (FBDD). 
Here, we use the GloVe embedding as in FBDD
for a fair comparison.
Random sanitization picks a token from the vocabulary uniformly.
We set the UMLDP parameters $p=0.3, w=0.9$ for \baseDPp 
(while Figure~\ref{fig:santext+w_p} plots the impacts of $p$ and $w$ when fixing $\epsilon=2$).




\begin{figure}[t]
    \centering
    \includegraphics[width=0.48\linewidth]{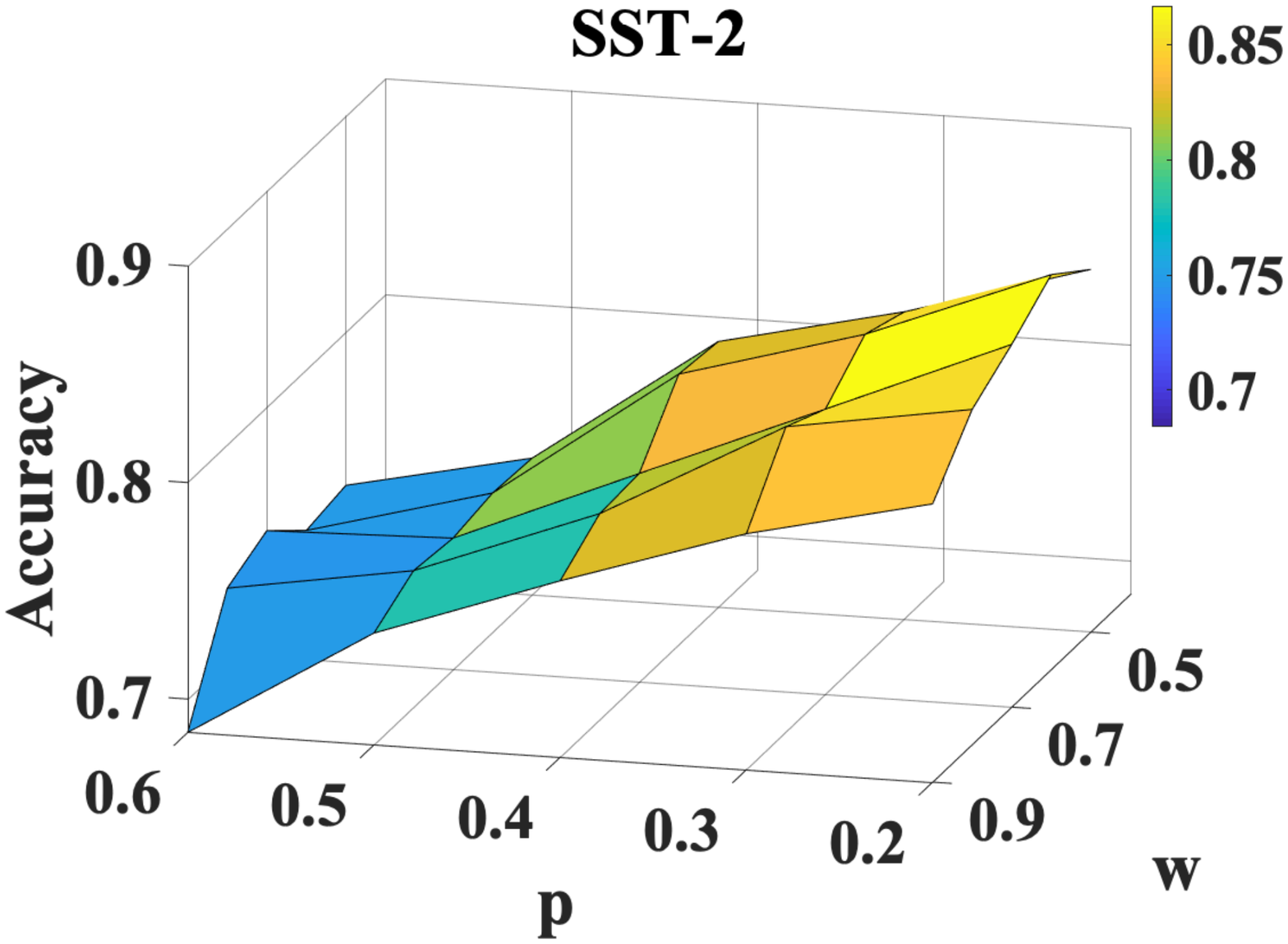}
    \includegraphics[width=0.48\linewidth]{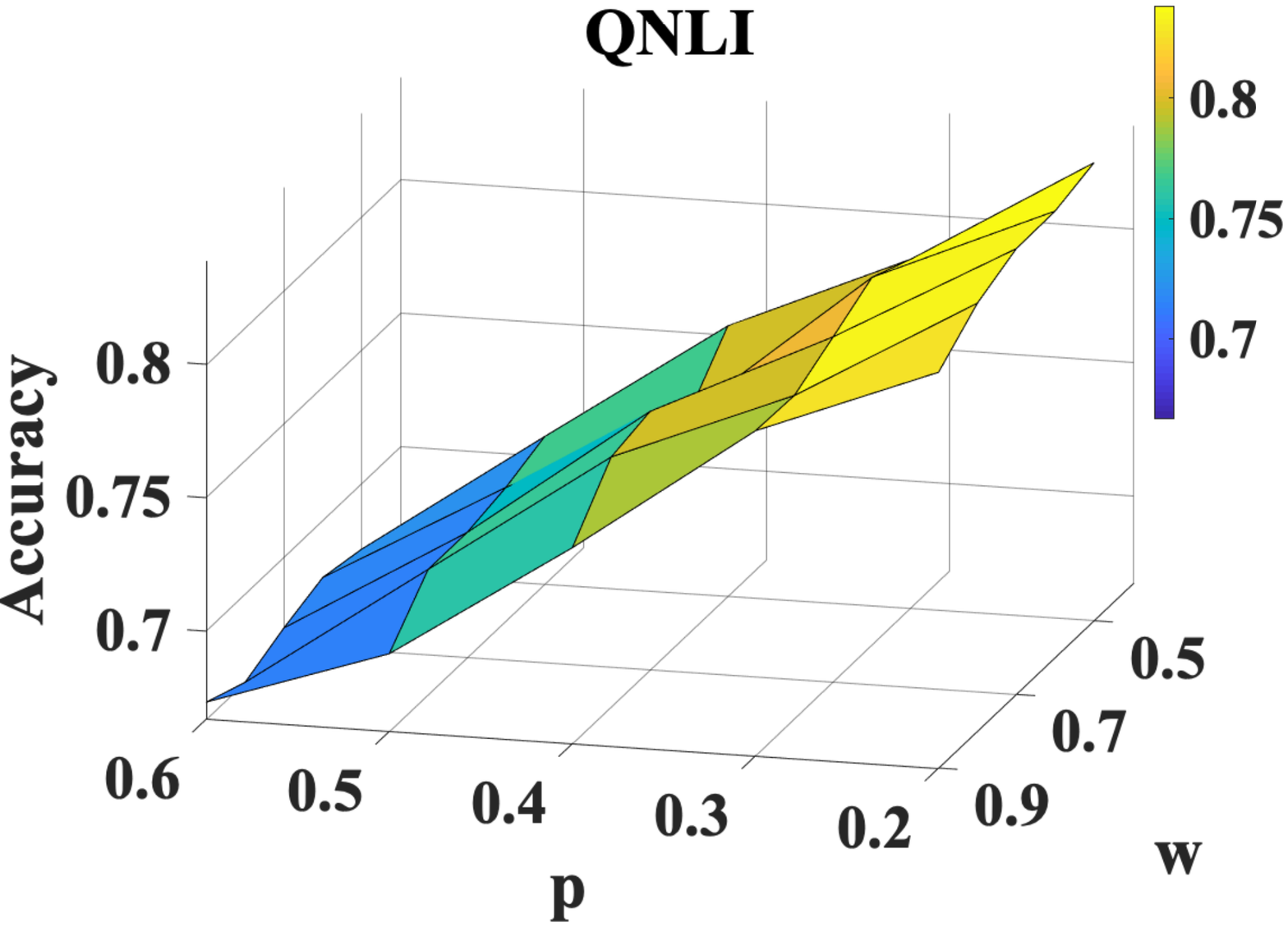}
    \vspace{-7pt}
    \caption{Performance of \baseDPp over $(w, p)$ when fixing $\epsilon=2$ based on the GloVe embedding
    }
    \label{fig:santext+w_p}
\end{figure}

Table~\ref{tbl:comparisonTask} shows the utility of the four mechanisms for the three selected tasks at different privacy levels. FBDD has a higher utility than random replacements. While both FBDD and \baseDP are based on word embeddings, \baseDP does not suffer from the ``curse-of-dimensionality" and achieves better utility at the same privacy level. \baseDPp achieves the best utilities in all cases since it allows the non-sensitive tokens to be noise-free, lowering the noise and improving the utility. 

In terms of efficiency, our \baseDP and \baseDPp are very efficient (\eg, ${\sim}2$min for the SST-2 dataset) compared with FBDD (${\sim}117$min) when they all run on a 24 core CPU machine. This is because our mechanisms only need to compute the sampling probability once and use the same probability matrix for sampling each time, while FBDD needs to recalculate the additive noise and re-search the nearest neighbor each time.

\begin{figure}[t]
    \centering
    \includegraphics[width=\linewidth]{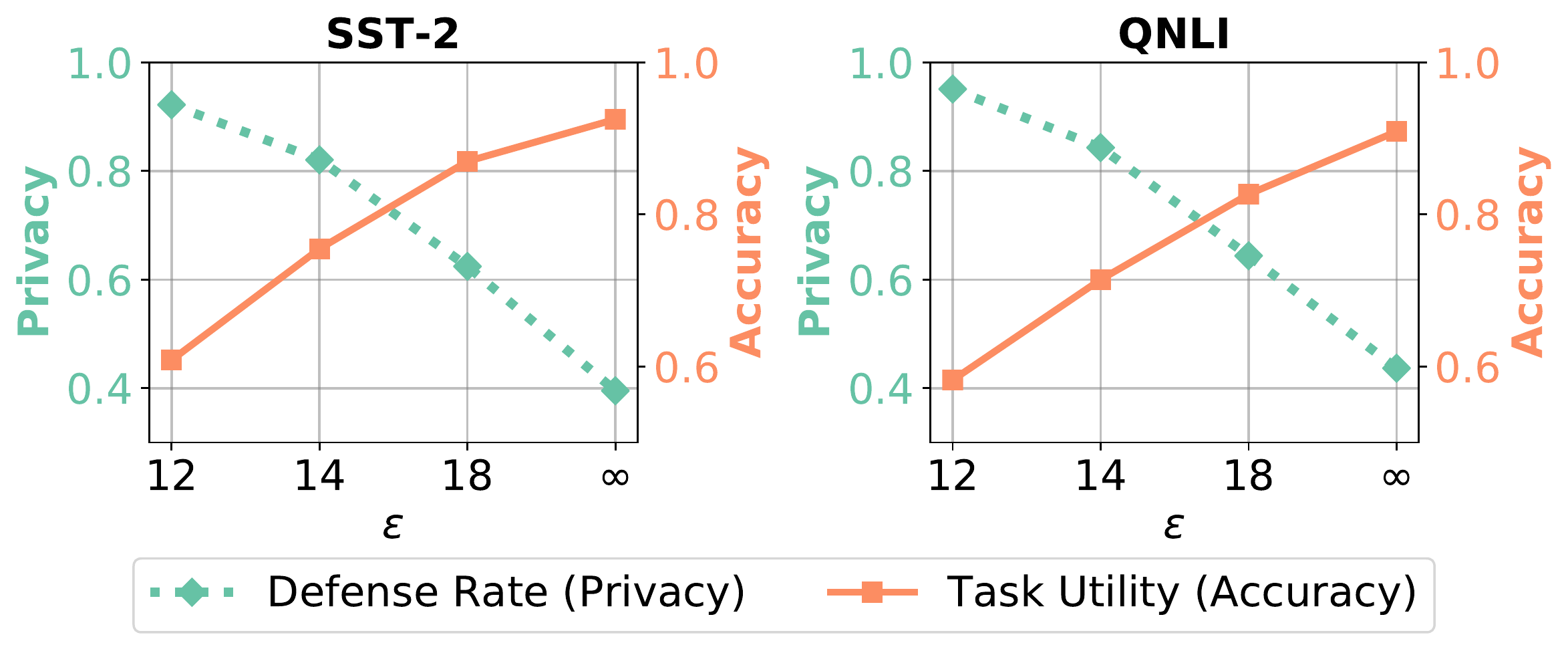}
    \vspace{-20pt}
    \caption{Privacy and Utility Tradeoffs of \baseDP
    in terms of Defense Rate (of the Mask Token Inference Attack) 
    versus Accuracy ($\epsilon=\infty$ means ``unsanitized.'')}
    \vspace{-2pt}
    \label{fig:token_inference_attack}
\end{figure}


\begin{table}[!th]
\resizebox{\linewidth}{!}{
\begin{tabular}{l|c|c|c|c}
\hline
\multirow{2}{*}{\textbf{Datasets}} & \multicolumn{2}{c|}{GloVe embeddings} & \multicolumn{2}{c}{BERT embeddings} \\
\cline{2-5}
 & $\V$ & $\V_S$ & $\V$ & $\V_S$ \\
\hline
\hline 
\textbf{SST-2} & 14,730 & 13,258 & \multirow{3}{*}{30,522} & \multirow{3}{*}{27,469} \\
\cline{1-3}
\textbf{MedSTS} & 3,320 & 2,989 & &  \\
\cline{1-3}
\textbf{QNLI} & 88,159 & 79,343 & &  \\
\hline
\end{tabular}
}
\caption{Sizes of vocabularies ($w=0.9$ for $\V_S$)}
\label{tbl:vocabSize}
\end{table}

\begin{figure*}[t!]
    \centering
     \begin{subfigure}[b]{0.95\linewidth}
     \centering
    \includegraphics[width=0.25\linewidth]{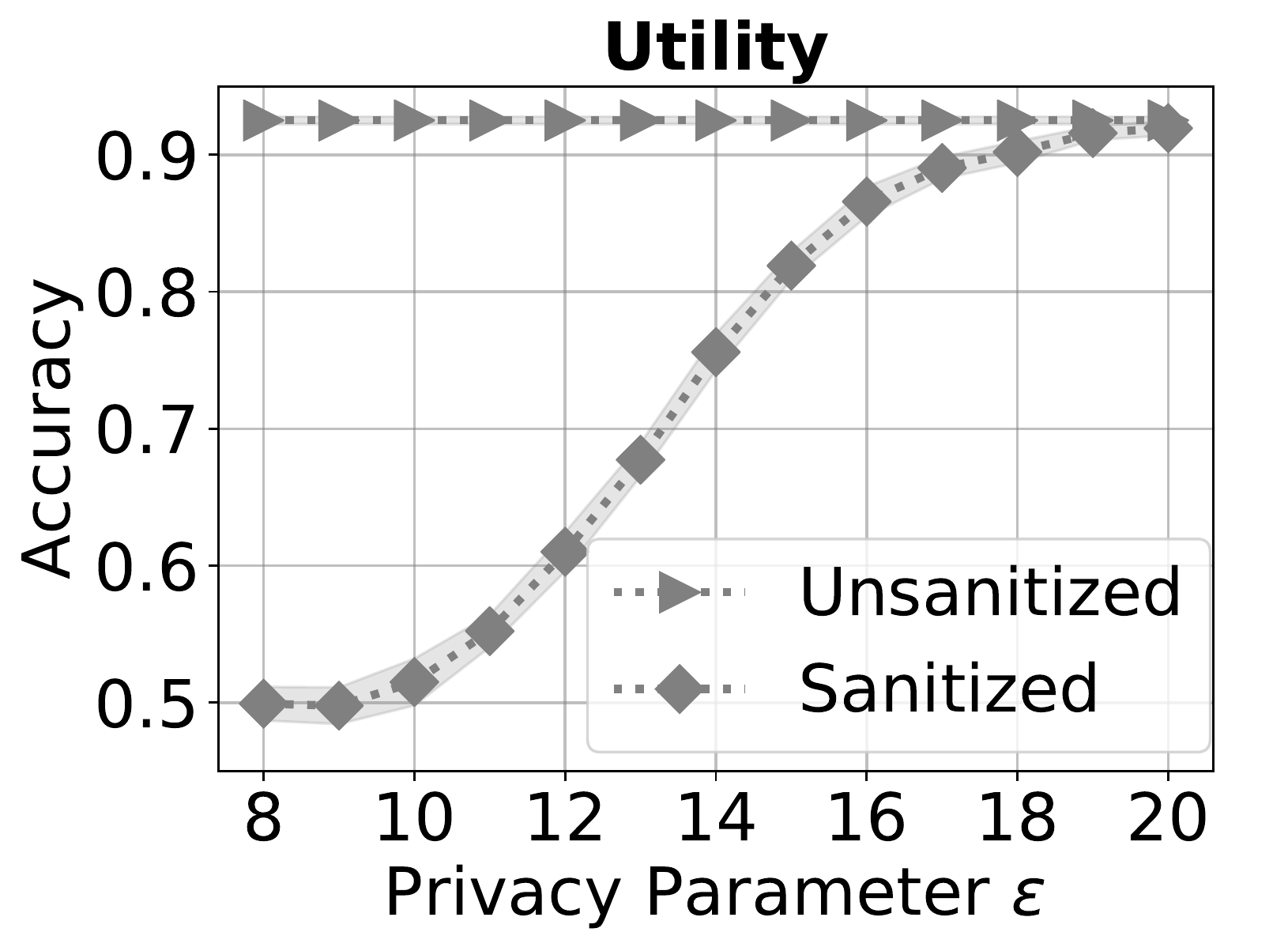}
    \hfill
    \includegraphics[width=0.73\linewidth]{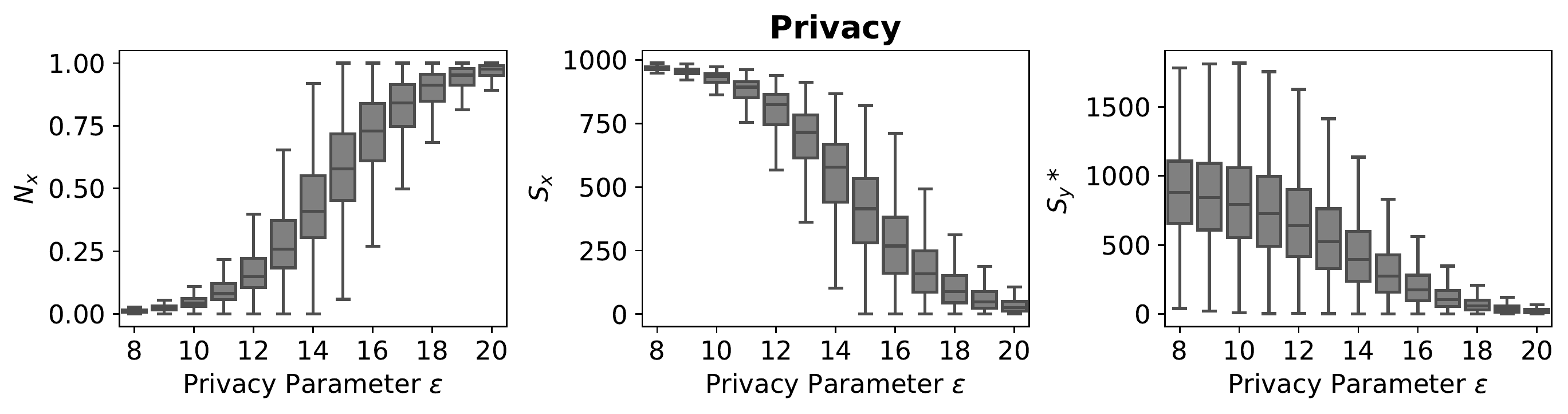} 
 \end{subfigure}
     \tikz{\draw[-,gray, densely dashed, thick](0,0) -- (16,0);}
   \begin{subfigure}[b]{0.95\linewidth}
   \centering
    \includegraphics[width=0.24\linewidth]{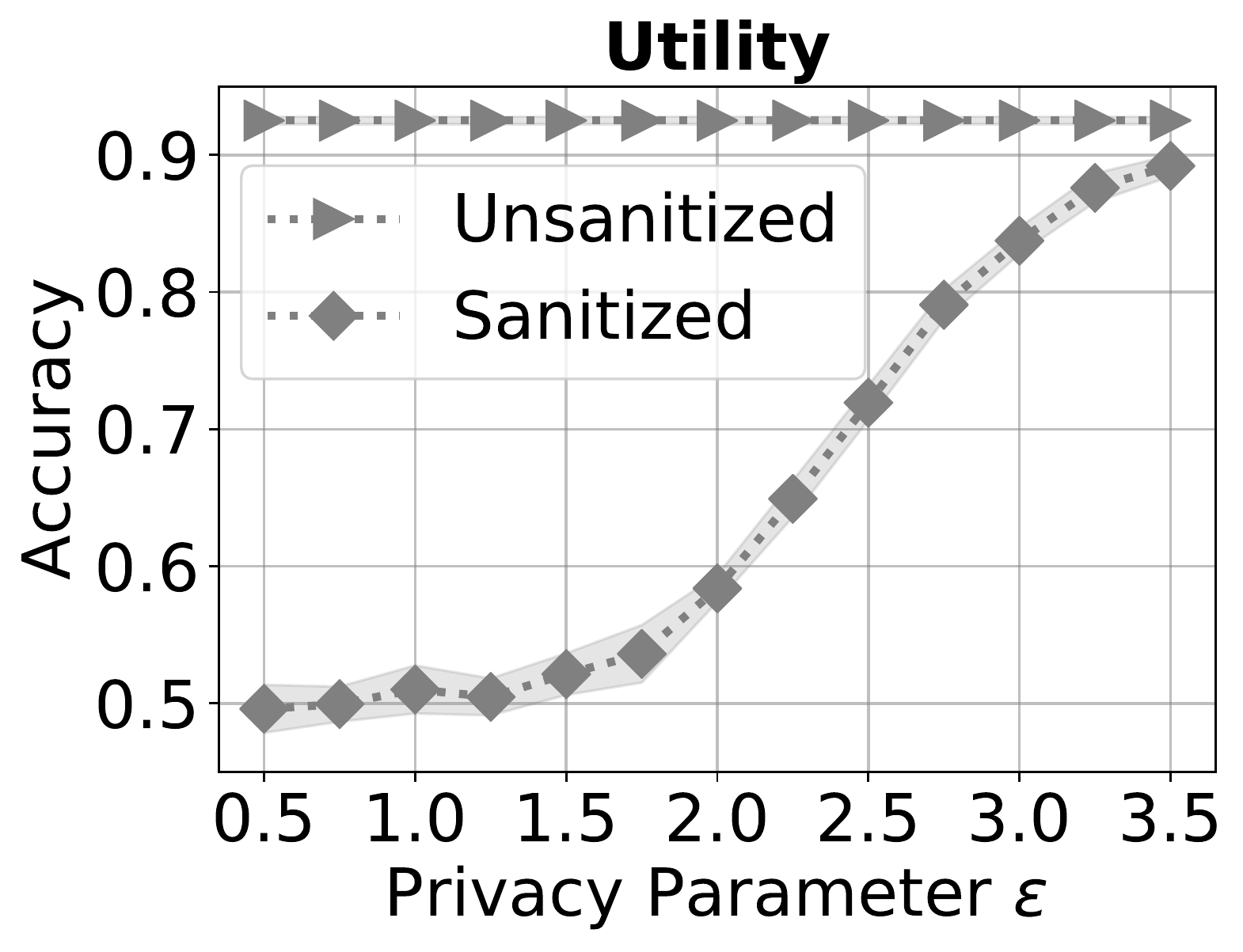}
    \hfill
    \includegraphics[width=0.73\linewidth]{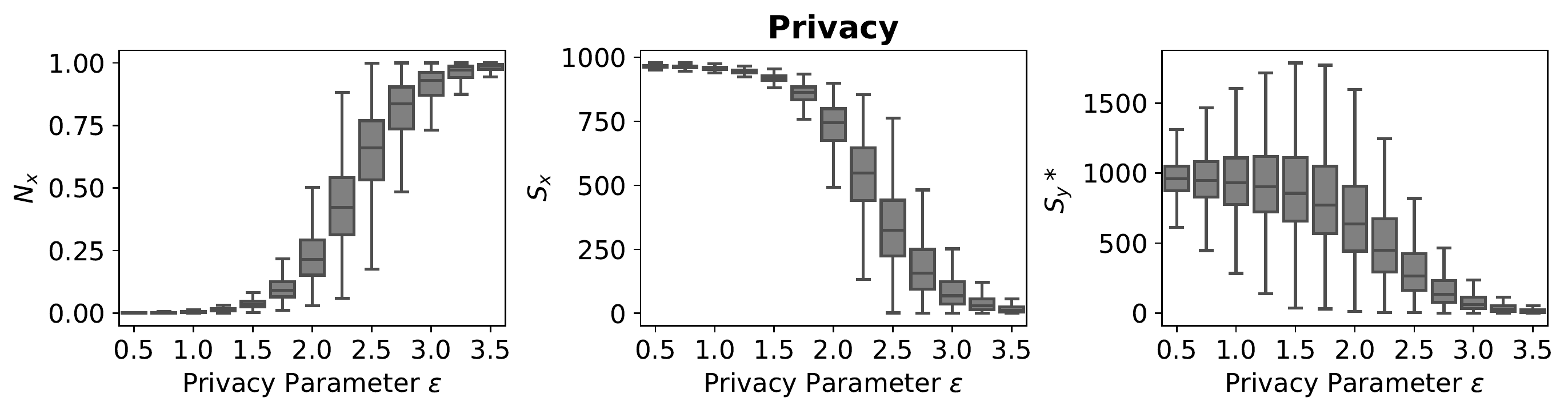}
    \end{subfigure}
    \vspace{-5pt}
    \caption{Influence of privacy parameter $\epsilon$ of \baseDP on the utility and privacy ($N_x$, $S_x$, $S^*_y$) based on the SST-2 dataset: The top panel is based on BERT embeddings, and the bottom panel is based on GloVe embeddings.}
    \label{fig:privacy_stat}
    \vspace{-10pt}
\end{figure*}

\subsection{Mask Token Inference Attack}
\label{sect:mask_token_inference}

From now on, we adopt the BERT embedding for its superiority.
As (U)MLDP is distance-metric dependent, 
we need to use different $\epsilon$'s 
(\eg, Figure~\ref{fig:privacy_stat}) 
to ensure a similar privacy level, specifically, $\epsilon\cdot d$.

Our sanitization mechanisms provide broad protection for seen/unseen attacks at a fundamental level (by sampling noise to directly replace original tokens) with formally-proven DP, \eg, two guesses of the original token with different styles are nearly probable in an attempt of authorship attribution~\cite{sigir/WeggenmannK18}
or other ``indirect'' attacks.
Here, we consider a \emph{mask token inference attack} as a representative study to ``confirm the theory'' by empirically measuring the ``concrete'' privacy level of sanitized texts.

To infer or recover original tokens given the sanitized text, one can let a pretrained BERT model infer the \texttt{MASK} token given its contexts. After all, BERT models are trained via masked language modeling.
For each sanitized text of the downstream (private) corpus, we replace each token sequentially by the special token \texttt{[MASK]} and input the masked text to the pretrained BERT model to obtain the prediction of the \texttt{[MASK]} position. 
Then, we compare the predicted token to the original token in the raw text. 
Figure~\ref{fig:token_inference_attack} reports the defense rate (the proportion of unmatched tokens to total tokens) and task utility of sanitized texts (by \baseDP) as well as unsanitized texts on SST-2 and QNLI. 
We see a privacy-utility trade-off: the more restrictive the privacy guarantee (smaller $\epsilon$), the lower the utility score. 
Notably, we improve the defense rate substantially with only a small amount of privacy loss (\eg, when $\epsilon=16$, \baseDP improves the defense rate by $20\%$ with only $4\%$ task utility loss over the SST-2 dataset in Figure~\ref{fig:token_inference_attack}). 


\begin{table}[t]
\resizebox{\linewidth}{!}{%
\begin{tabular}{l|c|c|c|c}
\hline
\multirow{2}{*}{\textbf{Datasets}} & \multirow{2}{*}{\textbf{$\epsilon$}} & \multicolumn{2}{c|}{\textbf{Utility}} & \multirow{2}{*}{$\Delta_{\privacy}$} \\ \cline{3-4} 
 &  & Original & \textbf{+Pretrain} &  \\ \hline
\multirow{3}{*}{\textbf{SST-2}} & 12 &0.6084  &\textbf{0.6208} &0.0089   \\\cline{2-5} 
 & 14 & 0.7548  &\textbf{0.7731}  & 0.0101   \\ \cline{2-5} 
 & 16 & 0.8698   &\textbf{0.8830}   & -{0.0046}   \\ \hline\hline
\multirow{3}{*}{\textbf{QNLI}} & 12 & 0.5822   & \textbf{0.6037}  &0.0076   \\ \cline{2-5} 
 & 14 &0.7143  &\textbf{0.7309} &-0.0047   \\ \cline{2-5}
 & 16 &0.8265 & \textbf{0.8369} &-0.0039    \\ \hline
\end{tabular}
}
\vspace{-5pt}
\caption{Sanitization-aware pretraining via \baseDP}
\label{tbl:comparisonModel}
\end{table}

\subsection{Effectiveness of Pretraining}
\label{sect:exp_pretrain}
We then show how the sanitization-aware pretraining further improves the utility but does not hurt the original privacy. Specifically, Table~\ref{tbl:comparisonModel} compares the accuracy of sanitization-aware fine-tuning based on the publicly-available \texttt{bert-base-uncased} model and our sanitization-aware pretrained one at different privacy levels on SST-2 and QNLI. Our sanitization-aware pretrained BERT models can obtain a $2\%$ absolute gain on average. 
We conjecture that it can be improved since our pretraining only uses $1/6$ of the data used in the original BERT pretraining and $1$ training epoch as an illustration.

To demonstrate that such utility improvement is not obtained by sacrificing privacy, we record the change of defense rate ($\Delta_{\privacy}$) in launching mask token inference attacks on the original BERT models and our sanitization-aware pretrained BERT models.
As Table~\ref{tbl:comparisonModel} confirmed, the privacy level of our sanitization-aware pretrained model is nearly the same as the original (sometimes even better).

\subsection{Influence of Privacy Parameter $\epsilon$}
\label{sect:epsilon}
We aim at striking a nice balance between privacy and utility by tuning $\epsilon$. 
%
To empirically show the influence of $\epsilon$, we report the utility and privacy scores over the SST-2 dataset based on \baseDP. The utility score is the accuracy over the test set.
We define three metrics to ``quantify'' privacy. 
Firstly, $N_x = \Pr[\M(x)=x]$, which we estimate by the frequency of seeing no replacement by $\M()$.
The output distribution of $x$ has full support over $\V$, \ie, $\Pr[\M(x) = y]>0$ for any $y \in \V$.
Yet, we are interested in the effective support $\S$, a set of $y$'s with cumulative probability larger than a threshold, and then define $S_x$ as its size.
$S_x$ can be estimated by the number of distinct tokens mapped from $x$. 
Both $N_x$ and $S_x$ can be related to two extremes of the R{\'e}nyi entropy~\citep{renyi1961measures}, defined as $H_\alpha(\M(x))= \frac{1}{1-\alpha} \log (\sum_{y \in \V}\Pr[\M(x) = y]^\alpha)$,
with an order $\alpha \geq 0$ and $\alpha \neq 1$.
The two extremes are obtained by setting $\alpha=0$ and $= \infty$, resulting in the Hartley entropy $H_0$ and the min-entropy $H_\infty$. 
This implies that we can also approximate $H_0$ and $H_\infty$ by $\log S_x$ and $-\log N_x$, respectively.
Making them large increases the entropy of the distribution. 

Another important notion is \emph{plausible deniability}~\citep{pvldb/BindschaedlerSG17}, 
\ie, a set of $x$'s could have led to an output $y$ with a similar probability. We define $S^*_y$ as the set size,
estimated by the number of distinct tokens mapped to $y$.

We run \baseDP $1,000$ times for 
the whole SST-2 dataset vocabulary.
As Figure~\ref{fig:privacy_stat} shows, when $\epsilon$ increases, the utility boosts and $N_x$ increases while $S_x, S^*_y$, and the privacy level of the mechanism decrease, which gives some intuition in picking~$\epsilon$, \eg, for ${\sim}40\%$ probability of replacing each token to a different one based on the BERT embeddings (top panel), we could set $\epsilon=15$ since the median of $N_x$ 
is ${\sim}60\%$ and the accuracy is ${\sim}81\%$.


\section{Conclusion}
Great predictive power comes with great privacy risks. 
The success of language models enables inference attacks. 
There are only a few works in differentially private (DP) text sanitization, probably due to its intrinsic difficulty. 
A new approach addressing the (inherent) limitation (\eg, in generality) of existing works is thus needed. 

Theoretically, we formulate a new LDP notion, UMLDP, which considers both sensitivity and similarity. While it is motivated by text analytics,
it remains interesting in its own right.
UMLDP enables our natural sanitization mechanisms without the curse of dimensionality faced by existing works.

Practically, we consider the whole PPNLP pipeline and build in privacy at the root with our sanitization-aware pretraining and fine-tuning. 
With our simple and clear definition of sensitivity, our work already achieved promising performance. 
Future research in sophisticated sensitivity measures will further strengthen our approach.

Surprisingly, our PPNLP solution is discerning like a cryptographic solution: 
it is kind (maintains high utility) to the good but not as helpful to the bad (not boosting up inference attacks). 
We hope our results with different metrics for quantifying privacy can provide more insights in privacy-preserving NLP and make it accessible to a broad audience.

\section*{Acknowledgements}
Authors at OSU are sponsored in part by the PCORI Funding ME-2017C1-6413, the Army Research Office under cooperative agreements W911NF-17-1-0412, NSF Grant IIS1815674, NSF CAREER \#1942980, and Ohio Supercomputer Center \cite{OhioSupercomputerCenter1987}.
The views and conclusions contained herein are those of the authors and should not be interpreted as representing the official policies,
either expressed or implied, of the Army Research Office or the U.S. Government. The U.S. Government is authorized to reproduce and distribute reprints for Government purposes notwithstanding any copyright notice herein.

Sherman Chow's research is supported by General Research Fund (CUHK 14210319) of UGC, HK. 
Authors at CUHK would like to thank Florian Kerschbaum for his inspiring talk given at CUHK that stimulated this research.

\clearpage
\appendix

\section{Supplementary Formalism Details}
\subsection{Definition of ULDP}
\label{sect:uldp}
\begin{definition}[$(\V_S,\V_P,\epsilon)$-ULDP~\citep{uss/Murakami019}]
\label{def:uldp-formal}
Given $(\V_S = \V_P) \subseteq \V$, a privacy parameter $\epsilon \geq 0$, $\M$ satisfies $(\V_S,\V_P,\epsilon)$-ULDP if it satisfies the properties:
\\
i) for any $x,x' \in \mathcal{V}$ and any $y\in \mathcal{V}_P$, we have 
\begin{align*}
\Pr[\mathcal{M}(x)=y] \leq e^\epsilon \Pr[\mathcal{M}(x')=y];
\end{align*}
ii) for any $y \in \mathcal{V}_U$, there is an $x\in \mathcal{V}_N$ such that
\begin{align*}
\Pr[\mathcal{M}(x)=y] > 0; \Pr[\mathcal{M}(x')=y] = 0 \ \text{for}\ x\neq x'.
\end{align*}
\end{definition}

\subsection{Differential Privacy Guarantee}
\label{sect:proof}

\begin{proof}[Proof of Theorem~\ref{thm:base}]
Consider $L = 1$, \ie, $D = \langle x \rangle$. 
For another document $D'$ with $x' \in \V \setminus \{x\}$ and a possible output $y \in \V$:
\begin{align*}
   &\frac{\Pr[\M(x)=y]}{\Pr[\M(x')=y]}\\
   = & \frac{C_{x} \cdot e^{-\frac{1}{2} \epsilon \cdot d_{\euc}(\phi(x),\phi(y))}}{C_{x'} \cdot e^{-\frac{1}{2} \epsilon \cdot d_{\euc}(\phi(x'),\phi(y))}} \\
   = & \frac{C_x}{C_{x'}}\cdot e^{\frac{1}{2}\epsilon \cdot [d(x',y)-d(x,y)]} \\
   \leq & \frac{C_x}{C_{x'}}\cdot e^{\frac{1}{2} \epsilon \cdot d(x,x')} \\
   = & \frac{\sum_{y' \in \V}e^{-\frac{1}{2} \epsilon \cdot d(x',y')}}{\sum_{y' \in \V}e^{-\frac{1}{2} \epsilon \cdot d(x,y')}} \cdot e^{\frac{1}{2} \epsilon \cdot d(x,x')} \\
   \leq & e^{\frac{1}{2} \epsilon \cdot d(x,x')} \cdot e^{\frac{1}{2} \epsilon \cdot d(x,x')} = e^{\epsilon \cdot d(x,x')}
\end{align*}
The proof, showing \baseDP ensures $\epsilon \cdot d(x,x')$-LDP, mainly relies on the triangle inequality of $d$.
To generalize to the case of $L > 1$, we sanitize every token $x_i$ in $D$ independently, and thus:
$$\Pr[\M(D) = \hat{D}] = \prod^L_{i=1} \Pr[\M(x_i)=y_i].$$
Then, for any $D,D'$, the privacy bound is given as 
$$\frac{\Pr[\M(D) = \hat{D}]}{\Pr[\M(D') = \hat{D}]} \leq e^{\epsilon \cdot \sum^L_{i=1}d(x_i,x'_i)},$$
which follows from the composability.
\end{proof}


\begin{proof}[Proof of Theorem~\ref{thm:enhanced}]
Consider the case $L=1$ with $D = x$ and $D' = x'$. 
For $x,x' \in \V_S$, the output $y$ is restricted to $\V_P$, with the proof identical to the above theorem (as \baseDP is run over $\V_S, \V_P$).

For $x,x' \in \V_N$ and $y \in \V_P$, we have 
\begin{align*}
     \frac{\Pr[\M(x)=y]}{\Pr[\M(x')=y]} & = \frac{p \cdot C_{x} \cdot e^{-\frac{1}{2} \epsilon \cdot d_{\euc}(\phi(x),\phi(y))}}{p \cdot C_{x'} \cdot e^{-\frac{1}{2} \epsilon \cdot d_{\euc}(\phi(x'),\phi(y))}} \\
     & \leq e^{\epsilon \cdot d(x,x')}.
\end{align*}
For $x \in \V_S$, $x' \in \V_N$, and $y \in \V_P$, we have
\begin{align*}
     \frac{\Pr[\M(x)=y]}{\Pr[\M(x')=y]} & = \frac{C_{x} \cdot e^{-\frac{1}{2} \epsilon \cdot d_{\euc}(\phi(x),\phi(y))}}{p \cdot C_{x'} \cdot e^{-\frac{1}{2} \epsilon \cdot d_{\euc}(\phi(x'),\phi(y))}} \\
     & \leq \frac{1}{p} \cdot e^{\epsilon \cdot d(x,x')} = e^{\epsilon \cdot d(x,x') + \epsilon_0}.
\end{align*}

The probability for $x \in \V_N$ is $(1-p)$. 
The above inequalities thus show that \baseDPp ensures the properties of UMLDP.
Similarly, we use the composability to generalize for $L >1$.
\end{proof}

\begin{table*}[t]
\resizebox{\linewidth}{!}{%
\begin{tabular}{l|c|l}
\hline
\multicolumn{3}{l}{\textbf{Dataset: SST-2}} \\ \hline
Mechanisms & $\epsilon$ & \begin{tabular}[c]{@{}l@{}}\textbf{Original Text:} \\ it   's a charming and often affecting journey . \end{tabular}\\ \hline
\multirow{3}{*}{\baseDP} & 1 & heated collide. charming   activity cause challenges beneath tends \\ \cline{2-3} 
 & 2 & worse beg, charming   things working noticed journey basically \\ \cline{2-3} 
 & 3 & all 's. charming and   often already journey demonstrating \\ \hline
\multirow{3}{*}{\baseDPp} & 1 & it unclear a charming and   often hounds journey \\ \cline{2-3} 
 & 2 & it exaggeration a   charming feelings often lags journey . \\ \cline{2-3} 
 & 3 & it 's a tiniest picked   often affecting journey . \\ \hline\hline
\multicolumn{3}{l}{\textbf{Dataset: QNLI}} \\ \hline
Mechanisms & $\epsilon$ & \begin{tabular}[c]{@{}l@{}}\textbf{Original Text:} \\
When did Tesla move to New York City? \\ In 1882, Tesla began working for the Continental Edison Company in France, \\ designing and making improvements to electrical equipment.\end{tabular} \\ \hline
\multirow{3}{*}{\baseDP} & 1 & \begin{tabular}[c]{@{}l@{}}43 trapper Gaga MCH digest sputtering avenged Forced Laborers \\ Homage Ababa afer psychic 51,000  intercity lambasting nightmare--confederate Frontier \\ Britian Manor Londres shards pilot Mining faster alone Thessalonica Bessemer Lie Columbus\end{tabular} \\ \cline{2-3} 
 & 2 & \begin{tabular}[c]{@{}l@{}}blame least ethos did tenth ballot Condemnation critical filmed \\ In 1883 3200 Conversion pushing   7:57 enabling Town stamp Time downwards Peterson France, \\ GSA emulating   addresses appealing 47.4 electrical pull refreshing\end{tabular} \\ \cline{2-3} 
 & 3 & \begin{tabular}[c]{@{}l@{}}Wave did Tesla It way Dru Tully breaking? \\ Tupelo 1875, Tesla began escaped for announcing Continental   Edison Company in France \\ However designing and making improvements to   electrical Chongqing add\end{tabular} \\ \hline
\multirow{3}{*}{\baseDPp} & 1 & \begin{tabular}[c]{@{}l@{}}Rodgers did Sung move to New plantation City ? \\ In K. innumerable Gunz began working sliding the   Sultans Edison Company structured France\\  beaching designing disseminate   making tribunals to lackluster equipment 40-foot\end{tabular} \\ \cline{2-3} 
 & 2 & \begin{tabular}[c]{@{}l@{}}vaults did Tesla chunks introduces Teknologi Eyes City ?\\ In 866 , Tesla began working for the Analytical Edison   Company Butterfly France , \\ designing Sias siblings Noting circumventing   electrical orient .\end{tabular} \\ \cline{2-3} 
 & 3 & \begin{tabular}[c]{@{}l@{}}When did Tesla guideline to New York City ? \\ In 1885 , Tesla MG working for the Continental Edison   Company in France , \\ translating and dreamed improvements ascertain electrical lookout .\end{tabular} \\ \hline
\end{tabular}
}
\caption{Qualitative examples from the SST-2 and QNLI datasets: Sanitized text by our mechanisms at different privacy levels based on GloVe embeddings}
\label{tbl:qual_ex}
\end{table*}

\subsection{Qualitative Observations}
\label{sect:qual_observation}
Below, we focus on \baseDP sanitizing a single token $x$.
We first make two extreme cases explicit.

\noindent
(1) When $\epsilon = 0$, the distribution in Eq.~(\ref{eq:baseline}) becomes
$\Pr[\M(x)=y] = \frac{1}{|\V|}, \forall y \in \V$.
\baseDP is perfectly private since $y$ is uniformly sampled at random, independent of $x$.
Yet, such a $y$ does not preserve any information of $x$.

\noindent
(2) When $\epsilon \rightarrow \infty$, we have 
$\Pr[\M(x)=x] \gg \Pr[\M(x)=y], y \in \V \setminus \{x\}$.
$\Pr[\M(x)=x]$ dominates others since $d(x,x)=0$
and $d(x,y) > 0$.
This loses no utility as $x$ almost stays unchanged, yet provides no privacy either.

For a general $\epsilon \in (0, \infty)$, the distribution has full support over $\V$, \ie, we have a non-zero probability for any possible $y \in \V$ such that $\M(x) = y$.
Also, given $y,y' \in \V$ with $d(x,y) < d(x,y')$, we have $\Pr[\M(x)=y] > \Pr[\M(x)=y']$.
As $\epsilon$ increases, $\Pr[\M(x)=y]$ for the $y$'s with large $d(x,y)$ goes smaller (and even approaches 0).
This means that the output distribution becomes ``skewed,'' \ie, the outputs concentrate on those $y$'s with small $d(x,y)$.
This is good for utility, which stems from the semantics
preservation of every 
token. 
On the contrary, too much concentration weakens the privacy.

For \baseDPp, the above results directly apply to the case $x \in V_S$ (as \baseDP is run over $\V_S$ and $\V_P$).
There is an extra $p$ determining whether a $x \in \V_N$ is mapped to a $y \in \V_P$.
If so, the results are similar except with an extra multiplicative $p$.
A larger $p$ leads to stronger privacy as the probability $(1-p)$ of $x$ being unchanged becomes smaller.

\section{Qualitative Examples}
\label{sect:qual_ex}
Table~\ref{tbl:qual_ex} shows two examples of sanitized texts output by \baseDP and \baseDPp at different privacy levels from the SST-2 and QNLI datasets.





\section{Supplementary Related Works}\label{sect:appendix-related}
Privacy is a practically relevant topic that also poses research challenges of diverse flavors. Below, we discuss some ``less-directly'' relevant works, showcasing some latest advances in AI privacy.

\paragraph{Cryptographic Protection of (Text) Analytics.}
There has been a flurry of results improving privacy-preserving machine-learning frameworks (\eg, \citep{nips/LouFF020}), which make use of cryptographic tools such as homomorphic encryption and secure multi-party computation (SMC) for general machine/deep learning.
These cryptographic designs can be adapted for many NLP tasks in principle.
Nevertheless, they will slow down computations by orders of magnitude since cryptographic tools, especially fully homomorphic encryption, are generally more heavyweight than the DP approaches.
One might be tempted to replace cryptography with \emph{ad hoc} heuristics. Unfortunately, it is known to be error-prone (\eg, a recently proposed attack~\cite{ijcai/WongMWNC20} can recover model parameters during ``oblivious'' inference).

A recent trend (\eg,~\cite{popets/WaghTBKMR21}) relies on multiple non-colluding servers to perform SMC for secure training.
However, SMC needs multiple rounds of communication. It is thus more desirable to have a dedicated 
connection among the servers.

Albeit with better utility (than DP-based designs), cryptographic approaches mostly consider immunity against membership inference~\cite{sp/ShokriSSS17} to be out of their protection scope since DP mechanisms could be applied over the training data before the cryptographic processing. 

There is a growing interest in privacy-preserving analytics in the NLP community too. Very recently, TextHide~\citep{emnlp/HuangSCLA20} devises an ``encryption'' layer for the hidden representations. Unfortunately, it is shown to be insecure by cryptographers and privacy researchers~\citet{corr/abs-2011-05315}.

\paragraph{Hardware-Aided Approaches.}
GPU can compute linear operations in a batch much faster than CPU. Nevertheless, we still need a protection mechanism in using GPU, another protection mechanism for the non-linear operations, and their secure integration. In general, utilizing GPU for privacy-preserving machine-learning computations is non-trivial (\eg, see~\cite{uss/NgC21} for an extended discussion).

To exploit the parallelism of GPU while minimizing the use of cryptography, 
one can resort to a trusted processor (\eg, Intel SGX) for performing non-linear operations within its trusted execution environment (TEE)
Note that one still needs to use cryptographic protocols to outsource the linear computation to (untrusted) GPU. Slalom~\cite{iclr/TramerB19} is such a solution that supports privacy-preserving inference. Training is a more challenging task that was left as an open challenge. Recently, it is solved by Goten~\cite{aaai/goten21}. Notably, both works are from cryptographers but also get recognized by the AI community.

Finally, we remark that the use of TEE is not a must in GPU-enabled solutions. 
For example, GForce~\cite{uss/NgC21} is one of the pioneering works that proposes GPU-friendly protocols for non-linear layers with other contributions.

\end{document}